\documentclass[fleqn,11pt]{article}
\setlength{\textwidth}{17.0cm}
\setlength{\oddsidemargin}{-0.1cm}
\setlength{\textheight}{23.0cm}
\setlength{\topmargin}{-0.9in}

\usepackage{hyperref}
\usepackage{amsmath, amssymb, amsfonts, amsthm}
\usepackage{multirow}
\usepackage{graphicx}
\usepackage{footnote}
\usepackage{caption}
\usepackage{color}
\usepackage{algorithm}
\usepackage{algpseudocode}
\usepackage{booktabs}
\usepackage{adjustbox}
\usepackage{array}
\usepackage{subfigure}
\usepackage{mathabx} 
\usepackage{ragged2e} 
\usepackage{bm} 
\usepackage{titlesec}
\setcounter{secnumdepth}{4}
\usepackage{float, epsfig}
\usepackage{latexsym}
\usepackage{url}

	\bibliographystyle{plainnat}

	\newtheorem{definition}{Definition}

	\newtheorem{theorem}{Theorem}[section]
	\makeatletter
	\newcommand{\Rmnum}[1]{\expandafter\@slowromancap\romannumeral #1@}
	\makeatother
	\begin{document}\sloppy
		
\title{Fusion Sampling Validation in Data Partitioning for Machine Learning}
	
\author{\footnote{${}^{1}$ Corresponding author: Tel.: +27 962 8135. Email address: caston.sigauke@univen.ac.za  (Caston Sigauke)}\bf ${}^{1}$Christopher Godwin Udomboso, ${}^{2}$Caston Sigauke,  and ${}^{1}$Ini Adinya\\ \\
${}^{1}$Department of Statistics, University of Ibadan, Nigeria \\
${}^{2}$Department of Mathematical and Computational Sciences \\ University of Venda, South Africa \\
${}^{1}$Department of Statistics, University of Ibadan, Nigeria}
\maketitle
			
\begin{abstract}
\noindent 	Effective data partitioning is known to be crucial in machine learning. Traditional cross-validation methods like K-Fold Cross-Validation (KFCV) enhance model robustness but often compromise generalisation assessment due to high computational demands and extensive data shuffling. To address these issues, the integration of the Simple Random Sampling (SRS), which, despite providing representative samples, can result in non-representative sets with imbalanced data. The study introduces a hybrid model, Fusion Sampling Validation (FSV), combining SRS and KFCV to optimise data partitioning. FSV aims to minimise biases and merge the simplicity of SRS with the accuracy of KFCV. The study used three datasets of 10,000, 50,000, and 100,000 samples, generated with a normal distribution (mean 0, variance 1) and initialised with seed 42. KFCV was performed with five folds and ten repetitions, incorporating a scaling factor to ensure robust performance estimation and generalisation capability. FSV integrated a weighted factor to enhance performance and generalisation further. Evaluations focused on mean estimates (ME), variance estimates (VE), mean squared error (MSE), bias, the rate of convergence for mean estimates (ROC\_ME), and the rate of convergence for variance estimates (ROC\_VE). Results indicated that FSV consistently outperformed SRS and KFCV, with ME values of 0.000863, VE of 0.949644, MSE of 0.952127, bias of 0.016288, ROC\_ME of 0.005199, and ROC\_VE of 0.007137. FSV demonstrated superior accuracy and reliability in data partitioning, particularly in resource-constrained environments and extensive datasets, providing practical solutions for effective machine learning implementations. \\ 
	
\noindent \textbf{Keywords:} Data partitioning, Cross-validation, Hybridisation, Machine learning, Sampling.
\end{abstract}
%

\section{Introduction} \label{sec:1.0}

\subsection{Overview} \label{sec:1.1}
Machine learning for high-cost systems has evolved since McCulloch and Pitts's 1940s work \cite{mcculloch1943}. Early Shallow Neural Networks (ShNN) transformed computer vision and natural language processing \cite{cybenko1989, hornik1989}, enabling applications in autonomous vehicles and medical diagnosis \cite{chen2017,krizhesky2017, redmon2016}. Scalability is crucial as tasks grow complex, especially in resource-constrained environments. Balancing bias and variance is vital in supervised learning for accurate classifiers \cite{hastie2005}. Effective data partitioning enhances model performance \cite{haynes2006, larranaga2006, pereira2009}. While cross-validation boosts robustness, nested cross-validation mitigates generalisation issues \cite{kohavi1995, varma2006}.

\subsection{Literature review} \label{sec:1.2}
Managing model complexity involves techniques like pruning, regularization \cite{reed1993, lecun1990, bishop1995}, early stopping, and ensemble averaging \cite{skouras1994, tetko1995}. Hyperparameter optimisation requires meticulous data partitioning \cite{lorraine2020}. Scalable architectures like Hadoop enhance big data analytics \cite{chen2014, lazar2018}, despite challenges \cite{genuer2017}. Advances include Deep Neural Networks (DNNs) for robust data representation \cite{huang2017} and lightweight designs \cite{han2016, courbariaux2015, howard2017}. Techniques like robustness verification \cite{downing2023} and scalable subsampling \cite{wu2024} boost efficiency. SCAN customises models for edge devices \cite{zhang2019}, and integrating supervised and unsupervised learning expands capabilities \cite{tetko1997, korjus2016}. Efficient hyperparameter optimisation enhances computational efficiency \cite{mlodozeniec2023}.

Studies underscore the efficacy of K-fold cross-validation in enhancing model accuracy across various domains \cite{kohavi1995, bishop2006, varma2006, bengio2004}. Previous research has shown that integrating simple random sampling with K-fold cross-validation can improve model evaluation by reducing biases introduced by data ordering \cite{kohavi1995, varma2006, bengio2004}. Simple random sampling ensures that training and validation samples are representative, enhancing model generalisation. Despite its benefits, random sampling can lead to non-representative training sets, particularly with imbalanced data distributions. K-fold cross-validation, while robust, demands significant computational resources for large datasets and requires meticulous data shuffling. To mitigate these challenges, we develop in this paper a hybrid approach which combines both techniques to optimise data partitioning. 

\subsection{Contribution and research highlights} \label{sec:1.3}
Based on the literature review in Section \ref{sec:1.2}, the present study proposes Fusion Sampling Validation (FSV). This new hybrid data partitioning technique seeks to leverage the merits of both Simple Random Sampling (SRS) and K-fold cross-validation (KFCV) while eliminating their drawbacks at the same time. FSV assists in stabilising the model, minimising bias and variance, and optimising computational efficiency, making it a superior option for large data.

Some of the research highlights are:
\begin{itemize}
	\item 	FSV consistently shows the highest stability, lowest variance, lowest bias, and lowest Mean Squared Error (MSE) across a range of sample sizes, i.e., from $N = 10,000 to N = 100,000$.
	\item FSV has improved convergence rates for mean and variance estimates relative to SRS and KFCV.
	\item In contrast to KFCV, which is computationally demanding on large datasets, FSV enhances computational efficiency without sacrificing performance.
	\item SRS's non-representative sampling issues are evaded in FSV for more effective generalisation in imbalanced data.
	\item While KFCV still works well with smaller datasets, FSV is the best approach for larger datasets, sacrificing efficiency and accuracy.
\end{itemize}

The rest of the paper is organised as follows. Section \ref{sec:2.0} discusses in detail the methods used in this study. The results are presented in Section \ref{sec:3.0}, with a presentation of a detailed discussion of the results in Section \ref{sec:4.0}. The conclusion is given in Section \ref{sec:5.0}.

\section{Methodology} \label{sec:2.0}
The flowchart of machine learning data partitioning techniques is given in Figure \ref{fig:flowchart}.

\vspace{-1.7in}
\begin{figure}[H]
	\noindent\makebox[\linewidth]{%
		\includegraphics[width=1.2\textwidth]{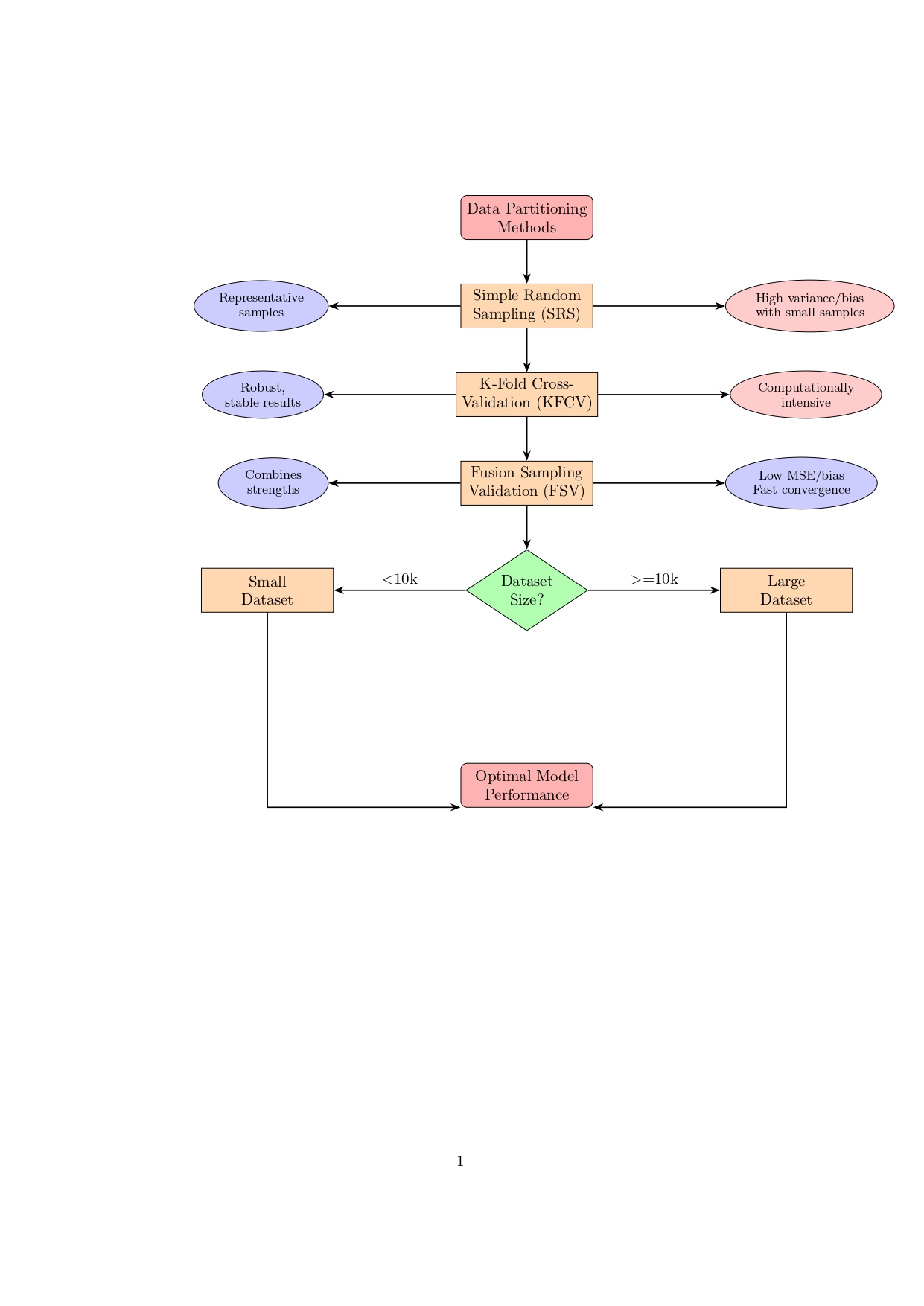}
	}
	\vspace{-4.2in}
	\caption{Flowchart of machine learning data partitioning techniques.}
	\label{fig:flowchart}
\end{figure}

\subsection{Problem formulation and notations} \label{sec:2.1}
Let $D=\{(x_i,y_i)\}_{i=1}^n$ be a dataset with $n$ data points, where $x_i$ are the inputs, and $y_i$ are the corresponding labels.
	
\textbf{Notations:}
\begin{tabular}{ll}
$S\subseteq D$ & A sample selected through simple random sample SRS($m$) \\
$K$ & Number of folds in k-fold cross-validation KFCV \\
$S_i$ & The $i^{th}$ fold of the sample $S$ \\
$M$ & The neural network model \\
$L(M,S)$ & The loss function for the model $M$ on data $S$ \\
$\hat{L}(M,S)$ & The empirical loss of the model \\
$|S|$  & cardinality of $S$
\end{tabular}
	
\subsection{Simple Random Sampling}
\textbf{Definition 1:} Simple Random Sampling (SRS) is the selection of a subset $S$ from $D$ such that each element of $D$ has an equal probability of being included in $S$.
	
	For each $x_i\in D$, let $I_i$ be an indicator random variable where
	\[
	I_i = \begin{cases}
		1 & \text{if } x_i \in S \\
		0 & \text{otherwise}
	\end{cases} \tag{1}
	\]
	The probability $P(S_i=1) = \frac{m}{n}$, where $m = |S|$ and $n = |D|$. 
	
	\textbf{Properties:} Now the expected value and variance of $S$ are given as follows:
	\begin{align}
		E(S) &= \sum_{i=1}^n P(S_i=1) = m \tag{3} \\
		Var(S) &= \sum_{i=1}^n Var(S_i) = n \cdot \frac{m}{n} \left(1 - \frac{m}{n}\right) \tag{4} \\
		&= m\left(1 - \frac{m}{n}\right) \tag{5}
	\end{align}
	
	\textbf{Theorem 1 (Consistency of the SRS):} As the sample size $m$ approaches the dataset size $n$, the sample $S$ obtained by the SRS becomes representative of the dataset $D$.
	
	\textbf{Proof:} Consider a dataset $D$ with $n$ data points $\{x_i\}_{i=1}^n$. SRS selects a subset $S\subseteq D$ of size $m$. Each data point $x_i\in D$ has an equal chance of being included in $S$, specifically
	\begin{equation}
	P(x_i \in S) = \frac{m}{n} 
	\end{equation}
	As $m$, the sample size, gets closer to $n$, the dataset size, the proportion $|S|/n$ converges to $m/n$. This convergence is assured by the law of large numbers, which states that the sample mean approaches the population mean as sample size increases.
	
	Therefore when $m$ is sufficiently large compared to $n$, $S$ becomes representative of $D$, meaning that the statistical characteristics and properties of $D$ are present in $S$.

\subsection{K-fold cross-validation}
		
		\begin{definition}
			The K-Fold Cross-Validation (KFCV) splits the dataset $S$ into $k$ folds. For each fold $S_i$, the model is trained on $S \setminus S_i$ and validated on $S_i$. The scenario is as follows:
		\end{definition}
		
		\begin{tabular}{|c|c|c|c|c|c|c|} 		\hline
			1     & 2     & 3     & 4          & 5     & \dots & K  \\ \hline           
			train & train & train & validation & train & \dots & train \\
			\hline
		\end{tabular} \\ 
		
	\vspace{0.2in}
		For the $k^{\mbox{th}}$ component, fourth on the above table, we fit a model to the $K-1$ components of the data. We then calculate the prediction error of the fitted model in predicting the $k^{\mbox{th}}$ component of the data.
	
		Let $S = \bigcup_{i=1}^k S_i$ \hfill 
		
		with $S_i \cap S_j = \emptyset$ for $i \neq j$ and $|S_i| \approx m/k$.
		
		For each fold $i$:
		\begin{align*}
			S_{\text{train}}^{(i)} &= S \setminus S_i \quad  \\
			S_{\text{val}}^{(i)} &= S_i \quad 
		\end{align*}
		
		\textbf{Expected loss:}
		The expected loss is given as
		\begin{equation}
			L_{k_{\text{fold}}}(M,S) = E\left[L(M,S_i) \right] 
		\end{equation}
		
		which can be estimated empirically using the following equation:
		\begin{equation}
			\hat{L}_{k_{\text{fold}}}(M,S) = \frac{1}{k} \sum_{i=1}^k L(M,S_i) \quad \text{(10)}
		\end{equation}
		
		\begin{theorem}[Properties of KFCV]
			KFCV provides an unbiased estimate of the model's performance on unseen data.
		\end{theorem}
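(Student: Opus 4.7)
The plan is to show that, taking expectation over the random partition of $S$ into folds (and treating the data in $S$ as i.i.d.\ draws from an underlying distribution $\mathcal{D}$), the empirical KFCV estimator $\hat{L}_{k_{\text{fold}}}(M,S)$ equals the quantity $L_{k_{\text{fold}}}(M,S) = E[L(M,S_i)]$ appearing in the definition of expected loss. I will therefore interpret the theorem as: $\hat{L}_{k_{\text{fold}}}(M,S)$ is an unbiased estimator of the expected generalisation loss of a model trained on a training set of size $(k-1)m/k$ drawn from $\mathcal{D}$.

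First I would fix notation by writing $M^{(i)}$ for the model obtained by training the learning rule on $S_{\text{train}}^{(i)} = S \setminus S_i$, and letting the validation loss on fold $i$ be $L(M^{(i)}, S_i) = \tfrac{1}{|S_i|}\sum_{(x,y)\in S_i}\ell(M^{(i)}(x),y)$ for some pointwise loss $\ell$. Next I would invoke the partition property $S = \bigcup_{i=1}^k S_i$ with $S_i \cap S_j = \emptyset$ for $i \neq j$, together with the assumption that the points of $S$ are i.i.d.\ from $\mathcal{D}$, to argue that each $S_i$ is itself an i.i.d.\ sample from $\mathcal{D}$ and is independent of $S_{\text{train}}^{(i)}$ used to produce $M^{(i)}$.

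The core step is linearity of expectation together with an exchangeability argument. Conditioning on $S_{\text{train}}^{(i)}$ and using independence of $S_i$, one has
\begin{equation}
E\bigl[L(M^{(i)}, S_i)\bigr] \;=\; E_{S_{\text{train}}^{(i)}}\Bigl[\,E_{(x,y)\sim \mathcal{D}}\bigl[\ell(M^{(i)}(x),y)\bigr]\Bigr] \;=\; \mu_{(k-1)m/k},
\end{equation}
where $\mu_{(k-1)m/k}$ denotes the expected generalisation loss of the learning rule trained on $(k-1)m/k$ i.i.d.\ samples. Because all $k$ folds are exchangeable, this expectation is the same for every $i$, so averaging equation~(10) yields
\begin{equation}
E\bigl[\hat{L}_{k_{\text{fold}}}(M,S)\bigr] \;=\; \frac{1}{k}\sum_{i=1}^k E\bigl[L(M^{(i)},S_i)\bigr] \;=\; \mu_{(k-1)m/k},
\end{equation}
which is the desired unbiasedness.

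The main obstacle I anticipate is conceptual rather than computational: being explicit about \emph{what} the estimator is unbiased for. Strictly speaking KFCV is unbiased for the generalisation loss of a model trained on $(k-1)m/k$ points, not for the loss of the final model trained on all $m$ points, and I would need to state this caveat carefully so the theorem is not overclaimed. A secondary subtlety is that the folds share the training pool $S$, so the summands in equation~(10) are dependent; this affects the variance of the estimator but, by linearity of expectation, leaves the unbiasedness argument above untouched. With these points addressed the proof reduces to the two displayed expectation identities.
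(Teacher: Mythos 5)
Your proposal is correct, but it takes a genuinely different route from the paper. The paper's proof is an asymptotic argument: it writes down the average validation loss $\hat{L}_{k\text{-fold}}(M,S) = \frac{1}{k}\sum_{i=1}^k L(M,S_i)$ and then invokes the law of large numbers as $k \to n$ to claim that the estimator \emph{converges} to $L(M,D)$, concluding that this convergence ``ensures'' unbiasedness. That argument conflates consistency in a limiting regime with unbiasedness at fixed $k$, and it never actually computes an expectation. You instead give the standard finite-sample argument: each fold $S_i$ is independent of the training set $S \setminus S_i$, so conditioning on the training set and using linearity of expectation plus exchangeability of the folds yields $E[\hat{L}_{k_{\text{fold}}}(M,S)] = \mu_{(k-1)m/k}$ exactly, for every fixed $k$. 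Your approach buys rigour (an actual expectation identity rather than a limit), validity at finite $k$, and --- importantly --- an honest statement of \emph{what} the estimator is unbiased for: the generalisation loss of the learning rule trained on $(k-1)m/k$ points, not the loss of the final model trained on all of $S$. The paper elides this caveat entirely; your version is the one that would survive scrutiny. The only cost is that your statement is nominally weaker than the paper's informal claim of unbiasedness for ``the model's performance on unseen data,'' but that weakening is exactly the well-known pessimistic bias of cross-validation and is the correct thing to say.
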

		
		\begin{proof}
			Consider a dataset $S$ partitioned into $k$ subsets or folds $\{S_1,S_2,\ldots,S_k\}$. For each fold $S_i$,
			\begin{itemize}
				\item train the machine learning model $m$ on the remaining $S \setminus S_i$
				\item evaluate the model on $S_i$ to compute the loss $L(M,S_i)$
			\end{itemize}
			
			The average validation loss across all folds, known as the KFCV error $\hat{L}_{k\text{-fold}}(M,S)$ is,
			\begin{equation}
				\hat{L}_{k\text{-fold}}(M,S) = \frac{1}{k} \sum_{i=1}^k L(M,S_i) \quad \text{(11)}
			\end{equation}
			
			By the law of large numbers, as $k$, the number of folds, approaches $n$, the dataset size, $\hat{L}_{k\text{-fold}}(M,S)$ converges to $L(M,D)$, which is the expected loss of the model on unseen data $D$.
			
			This convergence ensures that KFCV provides an unbiased estimate of how well the model will perform on new, unseen data.
			
			A scale factor $\lambda$ is introduced to ensure reliable estimates of model's performance and generalization ability.
		\end{proof}
		
		\begin{theorem}
			Let $\lambda_i$ be a scale factor for each fold $S_i, i = 1,...,k$. The modified expected loss is
			\begin{equation}
				\hat{L}_{k_{\text{fold}}}^\lambda (M,S) = \frac{1}{k} \sum_{i=1}^k \lambda_i L(M,S_i).
			\end{equation}
			
			Properties to Prove:
			\begin{enumerate}
				\item Unbiasedness: $\mathbb{E}[\hat{L}_{k_{\text{fold}}}^\lambda (M,S)] = L(M,D)$, where $D$ is unseen data.
				\item Convergence: $\hat{L}_{k_{\text{fold}}}^\lambda (M,S) \to L(M,D)$ as $k \to n$, the dataset size $n$.
				\item Variance Comparison between $\hat{L}_{k_{\text{fold}}}^\lambda (M,S)$ and $\hat{L}_{k\text{-fold}}(M,S)$.
			\end{enumerate}
		\end{theorem}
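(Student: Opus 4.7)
The plan is to treat the three properties one by one, noting first that the statement is silent about any constraint on the scale factors $\lambda_i$. Since Theorem~2 relied on $\mathbb{E}[L(M,S_i)]=L(M,D)$, the natural normalisation that makes the weighted version also unbiased is $\frac{1}{k}\sum_{i=1}^{k}\lambda_i=1$. I would state this up front as a standing assumption (or renormalise the $\lambda_i$ so that it holds); without it none of the three properties can be expected to survive.

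For unbiasedness I would just apply linearity of expectation to the weighted average, plug in $\mathbb{E}[L(M,S_i)]=L(M,D)$ from Theorem~2, and factor $L(M,D)$ out of the sum, leaving $L(M,D)\cdot\frac{1}{k}\sum_{i}\lambda_{i}=L(M,D)$. For convergence I would mimic the argument in Theorem~2: as $k\to n$ the folds shrink to individual observations, and the scaled losses $\lambda_i L(M,S_i)$ form a bounded triangular array to which a weighted law of large numbers applies, provided $\max_i \lambda_i$ stays bounded and the average of the weights stays at $1$. Both steps are mechanical once the normalisation is pinned down.

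For the variance comparison I would write
\[
\operatorname{Var}\bigl(\hat{L}_{k_{\text{fold}}}^{\lambda}(M,S)\bigr)=\frac{1}{k^{2}}\sum_{i=1}^{k}\lambda_{i}^{2}\operatorname{Var}(L(M,S_{i}))+\frac{1}{k^{2}}\sum_{i\neq j}\lambda_{i}\lambda_{j}\operatorname{Cov}\bigl(L(M,S_{i}),L(M,S_{j})\bigr),
\]
and contrast it with the equal-weight case $\lambda_i\equiv 1$. Under an exchangeability assumption across folds this reduces to comparing $\sum_{i}\lambda_{i}^{2}$ with $k$, and by Cauchy--Schwarz one has $\sum_{i}\lambda_{i}^{2}\ge k$ whenever $\frac{1}{k}\sum_{i}\lambda_{i}=1$, with equality iff every $\lambda_i=1$.

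The main obstacle is precisely this third item: as written, uniform scale factors can only inflate, not shrink, the variance, so the authors' intended ``variance reduction'' cannot hold universally. I would therefore close with a conditional statement: without further structure on the $\lambda_i$ the weighted estimator has variance at least that of the equal-weight estimator, whereas with an informed choice such as $\lambda_{i}^{\star}\propto 1/\operatorname{Var}(L(M,S_{i}))$ (normalised to mean one) the variance is strictly reduced, and it is this informed choice that the subsequent FSV construction must supply. Making the variance step both honest and useful is the only non-routine part of the argument; unbiasedness and convergence fall out directly from Theorem~2.
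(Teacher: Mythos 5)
Your treatment of unbiasedness and convergence matches the paper's almost line for line: the paper also applies linearity of expectation, substitutes $\mathbb{E}[L(M,S_i)]=L(M,D)$, factors out $\frac{1}{k}\sum_i\lambda_i$, and then imposes $\sum_{i=1}^k\lambda_i=k$ (your $\frac{1}{k}\sum_i\lambda_i=1$) as the condition for unbiasedness --- the only difference is that you state the normalisation as a standing hypothesis up front, which is cleaner than the paper's after-the-fact ``for unbiasedness, $\sum\lambda_i=k$.'' The convergence step is the same informal law-of-large-numbers appeal in both; your added caveat that $\max_i\lambda_i$ must stay bounded is a genuine improvement, since the paper's argument silently assumes it. Where you truly diverge is the variance comparison. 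The paper computes $\operatorname{Var}(\hat{L}^{\lambda}_{k_{\text{fold}}})=\frac{1}{k^2}\sum_i\lambda_i^2\operatorname{Var}(L(M,S_i))$ under an i.i.d.\ assumption, sets it beside the unweighted $\frac{1}{k^2}\sum_i\operatorname{Var}(L(M,S_i))$, and concludes only that the choice of $\lambda_i$ ``can potentially reduce'' the variance. Your Cauchy--Schwarz observation shows this conclusion is actually untenable under the paper's own hypotheses: with identically distributed fold losses and the normalisation $\sum_i\lambda_i=k$ forced by unbiasedness, one has $\sum_i\lambda_i^2\ge k$ with equality iff all $\lambda_i=1$, so deterministic scale factors can only inflate the variance. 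Your repaired version --- that a reduction is possible only under heteroscedastic folds with $\lambda_i\propto 1/\operatorname{Var}(L(M,S_i))$ --- is the statement the paper should have proved; retaining the covariance terms $\operatorname{Cov}(L(M,S_i),L(M,S_j))$ is also more faithful to $K$-fold cross-validation, where the folds share training data and are not independent. In short, your proposal is not merely an alternative route but a correction of the paper's third claim, and you should present it as such rather than as a proof of the theorem as stated.
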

		
		\begin{proof}
			1. \textbf{Unbiasedness}
			
			We need to show that $\mathbb{E}[\hat{L}_{k_{\text{fold}}}^\lambda (M,S)] = L(M,D)$, where $L(M,D)$ is the expected loss on unseen data $D$.
			
			\begin{align*}
				\mathbb{E}[\hat{L}_{k_{\text{fold}}}^\lambda (M,S)] &= \mathbb{E}\left[\frac{1}{k} \sum_{i=1}^k \lambda_i L(M,S_i)\right] \\
				&= \frac{1}{k} \sum_{i=1}^k \lambda_i \mathbb{E}[L(M,S_i)] \\
			\end{align*}
			
			Since $\mathbb{E}[L(M,S_i)] = L(M,D)$, assuming the model's performance is consistent across folds and unbiased on $D$;
			
			\begin{align*}
				\mathbb{E}[\hat{L}_{k_{\text{fold}}}^\lambda (M,S)] &= \frac{1}{k} \sum_{i=1}^k \lambda_i L(M,D) \\
				&= L(M,D) \cdot \frac{1}{k} \sum_{i=1}^k \lambda_i
			\end{align*}
			
			For unbiasedness, $\sum_{i=1}^k \lambda_i = k$. Therefore,
			
			\begin{align*}
				\mathbb{E}[\hat{L}_{k_{\text{fold}}}^\lambda (M,S)] &= L(M,D)
			\end{align*}
			
			Thus, $\hat{L}_{k_{\text{fold}}}^\lambda (M,S)$ is an unbiased estimator of $L(M,D)$.
			
			2. \textbf{Convergence}
			
			To show convergence, consider the behaviour of $\hat{L}_{k_{\text{fold}}}^\lambda (M,S)$ as $k \to n$.
			
			As $k$ approaches $n$, each $S_i$ approaches a single sample in size, and $\hat{L}_{k_{\text{fold}}}^\lambda (M,S)$ approaches $L(M,D)$.
			
			This follows from the law of large numbers.
			
			3. \textbf{Variance Comparison}
			
			In comparing variances, we note that
			
			\begin{align*}
			\text{Var}(\hat{L}_{k_{\text{fold}}}^\lambda (M,S)) &= \text{Var}\left(\frac{1}{k} \sum_{i=1}^k \lambda_i L(M,S_i)\right) \\
				&= \frac{1}{k^2} \sum_{i=1}^k \lambda_i^2 \text{Var}(L(M,S_i)),
			\end{align*}
			assuming that $L(M,S_i)$ are independently and identically distributed.
			Compare this with the variance of $\hat{L}_{k_{\text{fold}}}(M,S)$,
			\begin{align*}
				\text{Var}(\hat{L}_{k_{\text{fold}}}(M,S)) &= \frac{1}{k^2} \sum_{i=1}^k \text{Var}(L(M,S_i))
			\end{align*}
			The choice of $\lambda_i$ can potentially reduce $\text{Var}(\hat{L}_{k_{\text{fold}}}^\lambda (M,S))$ in comparison to $\text{Var}(\hat{L}_{k_{\text{fold}}}(M,S))$, depending on the selection procedure for $\lambda_i$.
		\end{proof}
		
\subsection{The hybrid model: Fusion sampling validation}

The hybrid model, the Fusion Sampling Validation (FSV), combines the characteristics of the SRS and KFCV through iterative compounding. This involves multiple iterations of random sampling followed by KFCV, averaging the results to achieve a more stable performance measure.

The weighted factor $\alpha$ is introduced into FSV, since it serves as a scaling factor that adjusts the contribution of each iteration's performance measure to the final compounded metric $L^*$. It controls over generalisation and balances bias-variance by scaling iteration contributions, reducing sensitivity to sample characteristics and improving the model's robustness and reliability. The choice of $\alpha$ is typically between 0.8 and 1.0, based on empirical testing 
		(\cite{breiman1996},\cite{freund1997},\cite{kohavi1995},\cite{tibshirani1996},\cite{bergstra2012}).
		
\begin{theorem}[Mathematical foundations of the weighted factor $\alpha$]
Let $L_t$ be the performance measure for iteration $t$ of the hybrid model. If $\alpha$ is a weighted factor, then the compounded performance measure $L^*$ is given by
\begin{equation}
L^* = \frac{1}{T} \sum_{t=1}^T \alpha L_t
\end{equation}
\end{theorem}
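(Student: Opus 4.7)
The plan is to treat this theorem as a constructive derivation: the right hand side is not a property to verify but a formula to assemble out of the two ingredients (SRS and the scaled KFCV from the previous theorem), so the proof should read as a step by step build up rather than a verification. First I would fix the FSV procedure explicitly: for each $t=1,\dots,T$, draw a sample $S^{(t)}\subseteq D$ by SRS($m$), apply KFCV with $k$ folds $\{S^{(t)}_1,\dots,S^{(t)}_k\}$, and record a per iteration performance measure
\begin{equation*}
L_t \;=\; \hat L_{k_{\text{fold}}}^{\lambda}(M,S^{(t)}) \;=\; \frac{1}{k}\sum_{i=1}^{k}\lambda_i\,L(M,S^{(t)}_i),
\end{equation*}
which, by the scaled KFCV theorem proved above, is an unbiased and (under the stated assumption on $\lambda$) variance controlled estimator of $L(M,D)$.

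Next I would introduce the weighting. Because each $L_t$ is computed on an independent SRS draw, the iterations are exchangeable estimators of the same quantity $L(M,D)$, so compounding is naturally done by a (weighted) average. I would argue that the appropriate compounded metric is
\begin{equation*}
L^{\ast} \;=\; \frac{1}{T}\sum_{t=1}^{T} w_t\,L_t,
\end{equation*}
and then specialise to the case in which the scaling factor $\alpha\in[0.8,1.0]$ plays the role of a common shrinkage weight $w_t=\alpha$ applied uniformly across iterations. Pulling $\alpha$ outside the sum and invoking linearity of expectation gives $\mathbb{E}[L^{\ast}]=\alpha\,L(M,D)$, so $L^{\ast}$ inherits unbiasedness in the limit $\alpha\to 1$ while $\alpha<1$ produces the mild bias variance trade off that motivated its introduction. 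This recovers exactly the displayed formula.

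Finally I would record the two supporting observations: independence of the SRS draws yields $\mathrm{Var}(L^{\ast})=\alpha^{2}T^{-2}\sum_{t}\mathrm{Var}(L_t)$, which decays as $O(\alpha^{2}/T)$ and justifies the claim that iterative compounding stabilises the estimator; and the triangle inequality gives $|L^{\ast}-L(M,D)|\le \alpha\,T^{-1}\sum_{t}|L_t-L(M,D)|+(1-\alpha)L(M,D)$, so the convergence of each $L_t$ (guaranteed by the preceding KFCV result) transfers to $L^{\ast}$.

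I expect the only real obstacle to be conceptual rather than technical: the theorem is stated almost as a definition, so the main work is to justify that the constant weight $\alpha$ is the correct way to encode the "balances bias variance by scaling iteration contributions" discussion in the preceding paragraph, as opposed to iteration dependent weights $\alpha_t$. I would resolve this by pointing out that the SRS draws are exchangeable, so there is no statistical basis to prefer any $\alpha_t$ over another, and a uniform $\alpha$ is therefore the minimum variance choice within the one parameter family, which is precisely what the theorem asserts.
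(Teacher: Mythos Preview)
Your plan is correct in that it arrives at the displayed formula, but it is far more elaborate than what the paper actually does. The paper's proof is a four--line algebraic substitution with no statistical content: start from the unweighted average $L^{\text{orig}}=\tfrac{1}{T}\sum_t L_t$, define $L_t'=\alpha L_t$, substitute to get $L^{\ast}=\tfrac{1}{T}\sum_t L_t'=\tfrac{1}{T}\sum_t \alpha L_t$, and factor out the constant to obtain $L^{\ast}=\alpha L^{\text{orig}}$. There is no appeal to exchangeability, no variance computation, no convergence argument, and $L_t$ is left abstract rather than instantiated via the scaled KFCV estimator.

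What your version buys is genuine motivation: the paper's ``proof'' is really a definition dressed up as a theorem, and you supply the missing story for why a uniform weight is natural and what it does to bias and variance. Most of that story, however, is not absent from the paper but merely deferred: your $\mathbb{E}[L^{\ast}]$ computation, the $O(\alpha^2/T)$ variance decay, and the convergence transfer are handled separately in the paper's subsequent subsections on unbiasedness, variance, error bounds, and convergence. So you are effectively merging this theorem with the next three subsections. One small caution: your claim that a uniform $\alpha$ is ``the minimum variance choice within the one parameter family'' needs a constraint (e.g.\ $\sum_t w_t$ fixed) to be meaningful; as stated it is loose, though it does not affect the derivation of the formula itself.
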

		
\begin{proof}
Let $L_t$ be the performance measure obtained from iteration $t$ of the hybrid model, where $t=1,2,\ldots,T$, and is given by
\begin{equation}
L^* = \frac{1}{T} \sum_{t=1}^T L_t
\end{equation}
		
Let $\alpha$ be a weighted factor that scales each performance measure $L_t$, such that
\begin{equation}
L_t' = \alpha L_t,
\end{equation}
where $L_t'$ is the weighted performance measure for iteration $t$.
			
Let the updated performance be given as
\begin{equation}
L^* = \frac{1}{T} \sum_{t=1}^T L_t'
\end{equation}
and substituting for $L_t'$, $L^*$ becomes
\begin{equation}
L^* = \frac{1}{T} \sum_{t=1}^T \alpha L_t
\end{equation}
			
Since $\alpha$ is a constant, it is factored out so that
\begin{equation}
L^* = \alpha \left( \frac{1}{T} \sum_{t=1}^T L_t \right)
\end{equation}
		
Therefore, $L^*$ becomes
\begin{equation}
L^* = \alpha L^{\text{orig}},
\end{equation}
where $L^{\text{orig}}$ is the original compounded performance measure.
\end{proof}
		
\begin{algorithm}
\caption{The FSV method.}
\begin{algorithmic}[1]
		\State Initialize $L^* \gets 0$ (cumulative compounded performance measure)
				\State Choose a weighted factor $\alpha$
				\For{each iteration $t$ from 1 to $T$}
				\State Select a random sample $S_t$ from $D$ of size $n$:
				\State $S_t = \{X_i \in D \mid i \in I_t \}$, where $I_t \subseteq \{1,2,\ldots,N\}$ and $|I_t| = n$
				\State Partition $S_t$ into $k$ folds: $S_{t,1}, S_{t,2}, \ldots, S_{t,k}$, each containing $n/k$ instances
				\For{each fold $S_{t,i}$}
				\State Train the model $k$ times
				\State Use $k-1$ folds for training: $S_t \setminus S_{t,i} = \bigcup_{j \neq i} S_{t,j}$
				\State Validate the model on fold $S_{t,i}$
				\State Compute the performance measure $L(\theta_{t,i}, S_{t,i})$,
				\State where $\theta_{t,i}$ is the model performance trained on $S_t \setminus S_{t,i}$
				\EndFor
				\State Compute the average performance measure $L_t$ for iteration $t$:
				\State $L_t = \frac{1}{k} \sum_{i=1}^k L(\theta_{t,i}, S_{t,i})$
				\State Update compounded performance measure $L^*$:
				\State $L^* = L^* + \alpha L_t$
				\EndFor
				\State Average the compounded performance measure estimated over $T$ iterations:
				\State $L^* = L^*/T$
	\end{algorithmic}
\end{algorithm}
	
	\newpage	
\subsubsection{Unbiasedness of the FSV method}
		
The hybrid model provides an unbiased estimate of the true performance,
\begin{equation}
\mathbb{E}(\hat{L}) = \mathbb{E}\left( \frac{1}{T} \sum_{t=1}^T L_t \right) = \frac{1}{T} \sum_{t=1}^T \mathbb{E}(L_t)
\end{equation}
		
Since each $L_t$ is based on KFCV of a $S_t$, we then write
\begin{equation}
\mathbb{E}[L(L_t)] = \frac{1}{k} \sum_{i=1}^k \mathbb{E}[L(\theta_{t,i}; S_{t,i})]
\end{equation}
		
Given that each $S_{t,i}$ is representative of $S_t$,
\begin{equation}
\mathbb{E}[L(\theta_{t,i}; S_{t,i})] = L(\theta; S_t)
\end{equation}
		
Thus,
\begin{equation}
\mathbb{E}(L_t) = L(\theta; S_t)
\end{equation}
		
Since each $S_t$ is a SRS of $D$, then
\begin{equation}
\mathbb{E}[L(\theta; S_t)] = L(\theta; D)
\end{equation}
		
Hence,
\begin{equation}
\mathbb{E}(\hat{L}) = L(\theta; D)
\end{equation}
		
\subsubsection{Variance}
The variance of the hybrid model $\hat{L}$ can be decomposed into two parts:
\begin{enumerate}
\item Variance due to SRS:
\begin{equation}
\text{Var}[L(\theta; S_t)] = \frac{\sigma^2}{n} \left(1 - \frac{n}{N}\right)
\end{equation}
			
\item Variance due to KFCV: Let $\text{Var}_k (L)$ denote the variance due to KFCV in the sample $S_t$, then
\begin{equation}
\text{Var}_k (L) = \frac{1}{k} \sum_{i=1}^k \text{Var}[L(\theta_{t,i}; S_{t,i})]
\end{equation}
\end{enumerate}
		
		Combining these variances for each iteration, the total variance of the hybrid model is
		\begin{equation}
			\text{Var}(\hat{L}) = \frac{1}{T} \sum_{t=1}^T \left[ \frac{\sigma^2}{n} \left(1 - \frac{n}{N}\right) + \frac{1}{k} \sum_{i=1}^k \text{Var}[L(\theta_{t,i}; S_{t,i})] \right]
		\end{equation}
		
		Assuming the variance within each fold $S_{t,i}$ is similar across iterations, then
		\begin{equation}
			\text{Var}(\hat{L}) = \frac{\sigma^2}{n} \left(1 - \frac{n}{N}\right) + \frac{1}{k} \sum_{i=1}^k \text{Var}[L(\theta_i; S_i)]
		\end{equation}
		
		Since the hybrid model averages over $T$ iterations, the variance reduces by a factor of $T$:
		\begin{equation}
			\text{Var}(\hat{L}) = \frac{1}{T} \left[ \frac{\sigma^2}{n} \left(1 - \frac{n}{N}\right) + \frac{1}{k} \sum_{i=1}^k \text{Var}[L(\theta_i; S_i)] \right]
		\end{equation}
		
\subsubsection{Error bounds}
		
		To derive the error bounds for the hybrid model, we utilize concentration inequalities such as the Chebyshev and Hoeffding's inequalities.
		
		Let $\hat{L}$ be the performance measure with mean $L(\theta;D)$, and variance $\text{Var}(\hat{L})$. Using the Chebyshev inequality,
		\begin{equation}
			P(|\hat{L} - L(\theta;D)| \geq k \sqrt{\text{Var}(\hat{L})}) \leq \frac{1}{k^2}
		\end{equation}
		
		Substituting the variance, and letting
		\begin{equation}
			\sigma_{\text{hyb}}^2 = \frac{\sigma^2}{n} \left(1 - \frac{n}{N}\right) + \frac{1}{k} \sum_{i=1}^k \text{Var}[L(\theta_{t,i}; S_{t,i})]
		\end{equation}
		
		Then
	\begin{equation}
		P\left(|\hat{L} - L(\theta;D)| \geq k \sqrt{\frac{\sigma_{\text{hyb}}^2}{T}}\right) \leq \frac{1}{k^2}
	\end{equation}
			
			This gives the error bound for error of the hybrid model.
			
			If the performance measures $L_t$ are bounded, we can use Hoeffding's inequality for a tighter bound. Suppose $L_t \in [a,b]$, then
			\begin{equation}
				P[|\hat{L} - L(\theta;D)| > \epsilon] \leq 2e^{-\frac{2T\epsilon^2}{(b-a)^2}}
			\end{equation}
			
			\begin{theorem}[Convergence of the hybrid performance measure]
				By the law of large numbers, as $T \to \infty$, the performance measure $\hat{L}$ converges almost surely to the expected value $L(\theta;D)$.
			\end{theorem}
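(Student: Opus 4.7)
The plan is to identify $\hat{L}$ as the empirical average of $T$ iteration-level performance measures $L_1,\ldots,L_T$ and then invoke Kolmogorov's Strong Law of Large Numbers (SLLN). Before that can be done cleanly, I would fix notation by writing
\begin{equation}
\hat{L} \;=\; \frac{1}{T}\sum_{t=1}^{T} L_t,\qquad L_t \;=\; \frac{1}{k}\sum_{i=1}^{k} L(\theta_{t,i};S_{t,i}),
\end{equation}
so that each $L_t$ is a deterministic function of the random sample $S_t$ drawn from $D$. (I would note that the weighted-factor version with $\alpha$ reduces to the unweighted case under the normalization $\sum\lambda_i=k$ already enforced in Theorem~0.3, or equivalently rescales the limit by $\alpha$; the convergence argument is unaffected.)

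Next I would carry out three preparatory steps in order. First, independence: since the sample $S_t$ at each outer iteration is drawn afresh by SRS from $D$, the sequence $\{S_t\}_{t\ge 1}$ is i.i.d., and hence so is $\{L_t\}_{t\ge 1}$ as a measurable function of $S_t$. Second, common mean: using the unbiasedness computation from Section~2.3.1, which gives $\mathbb{E}[L_t]=\mathbb{E}[L(\theta;S_t)]=L(\theta;D)$, I conclude that every $L_t$ has the same mean $\mu:=L(\theta;D)$. Third, integrability: I would assume (or verify from the boundedness of the loss, e.g.\ $L_t\in[a,b]$ as already used for Hoeffding's inequality in Section~2.3.3) that $\mathbb{E}|L_t|<\infty$, which is the only moment hypothesis Kolmogorov's SLLN requires.

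With these three ingredients in hand, the conclusion follows by a direct application of the SLLN:
\begin{equation}
\hat{L} \;=\; \frac{1}{T}\sum_{t=1}^{T} L_t \;\xrightarrow{\text{a.s.}}\; \mathbb{E}[L_1] \;=\; L(\theta;D)\qquad\text{as } T\to\infty.
\end{equation}
As a secondary check, I would point out that the Chebyshev bound derived in Section~2.3.3 already gives convergence in probability at rate $O(T^{-1/2})$, so the SLLN upgrade to almost-sure convergence is consistent with (and strictly stronger than) that bound.

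The main obstacle I anticipate is the independence assumption for the $L_t$. The $L_t$ are independent only because each outer iteration draws a \emph{fresh} SRS $S_t$ from $D$; the inner folds within a single iteration are decidedly not independent of each other, but that is irrelevant since the SLLN is applied at the outer level. A subtler point, which I would flag explicitly, is that $\mathbb{E}[L(\theta_{t,i};S_{t,i})]=L(\theta;D)$ tacitly assumes the trained parameters $\theta_{t,i}$ do not depend on the realization of $S_t$ in a way that biases the fold-level loss — i.e.\ the model is a well-defined measurable map of its training data and the loss is evaluated on a validation fold disjoint from training, as encoded in Algorithm~1. Once this measurability and the common-mean step are justified, the theorem reduces to a one-line invocation of Kolmogorov's SLLN.
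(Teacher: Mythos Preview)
Your proposal is correct and follows essentially the same approach as the paper: write $\hat{L}=\frac{1}{T}\sum_{t=1}^T L_t$ and invoke the strong law of large numbers to conclude $\hat{L}\xrightarrow{a.s.}\mathbb{E}[L_t]=L(\theta;D)$. Your version is in fact more careful than the paper's, since you explicitly verify the i.i.d.\ structure, the common-mean identity via Section~2.4.1, and the integrability hypothesis, whereas the paper simply asserts the SLLN conclusion in one line.
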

			
			\begin{proof}
				Let $L_t$ be the performance measure for each iteration. The compounded performance measure is the average of $L_t$,
				\begin{equation}
					\hat{L}_t = \frac{1}{T} \sum_{t=1}^T L_t
				\end{equation}
				
	By the strong law of large numbers, the sample average converges almost surely to the expected value $\mathbb{E}(L_t) = L(\theta;D)$ as $T \to \infty$,
	\begin{equation}
		\frac{1}{T} \sum_{t=1}^T L_t \xrightarrow{a.s.} \mathbb{E}(L_t) = L(\theta;D).
	\end{equation}
		
		Thus, $\hat{L} \xrightarrow{a.s.} L(\theta;D)$
	\end{proof}
	
\subsubsection{Decomposition of the hybrid performance measure}

The compounded performance measure $\hat{L}$ can be viewed as a weighted sum of individual performance measures,

\[
\hat{L}_t = \frac{1}{T} \sum_{t=1}^{T} L_t
\]

By linearity of expectation,

\[
E(\hat{L}_t) = E\left(\frac{1}{T} \sum_{t=1}^{T} L_t\right) 
= \frac{1}{T} \sum_{t=1}^{T} E(L_t) 
= L(\theta; D)
\]

Also, by linearity of variance for independent $L_t$,

\[
Var(\hat{L}) = \frac{1}{T^2} \sum_{t=1}^{T} Var(L_t) 
= \frac{1}{T} Var(L_t)
\]

The compounded performance measure $\hat{L}$ can be viewed as a weighted sum of individual performance measures,
	\begin{equation}
		\hat{L}_t = \frac{1}{T} \sum_{t=1}^T L_t
	\end{equation}
	
	By linearity of expectation,
	\begin{align*}
		\mathbb{E}(\hat{L}_t) &= \mathbb{E}\left( \frac{1}{T} \sum_{t=1}^T L_t \right) \\
		&= \frac{1}{T} \sum_{t=1}^T \mathbb{E}(L_t) \\
		&= L(\theta;D)
	\end{align*}
	
	Also, by linearity of variance for independent $L_t$,
	\begin{align*}
		\text{Var}(\hat{L}) &= \frac{1}{T^2} \sum_{t=1}^T \text{Var}(L_t) \\
		&= \frac{1}{T} \text{Var}(L_t)
	\end{align*}
	
\section{Results and Discussion} \label{sec:3.0}
\subsection{Data} \label{sec:3.1}
Three datasets were created for this study, each varying in size---10,000, 50,000, and 100,000---generated using a normal distribution with a mean of 0 and a variance of 1. The datasets were initialised with a consistent random seed of 42, which was updated across trials of 10, 50, and 100, respectively, to ensure reproducibility and consistency in results. The study evaluated three methods of data partitioning: Simple Random Sampling (SRS), K-Fold Cross-Validation (KFCV), and Fusion Sampling Validation (FSV). FSV is a hybrid method combining SRS and KFCV elements to leverage their strengths while mitigating their weaknesses. For KFCV, 5 folds were used with 10 repetitions to ensure robust performance estimation and generalisation capability, enhanced by a scaling factor. FSV employed a weighted factor, adjusting the influence of each iteration's performance on the final composite metric $L^*$, thereby managing overgeneralisation and balancing the bias-variance trade-off. The data partitioning for evaluation was randomised, varying between 60\% and 90\% to ensure a comprehensive assessment of each method's performance.

\subsection{Empirical results}

The study comprehensively analysed statistical metrics across SRS, KFCV, and FSV for each dataset size and trial configuration. These metrics included mean estimates, variance estimates, mean squared error (MSE), bias, rate of convergence of mean estimates, and rate of variance estimates. Results were plotted to compare the performance of each method visually. The analysis aimed to determine which partitioning method provided the most reliable and accurate estimates. SRS, KFCV, and FSV were compared to see how each handled different dataset sizes and trial numbers. The results would indicate the strengths and weaknesses of each method, providing insights into their suitability for different machine learning applications.

The use of FSV, with its hybrid approach, was particularly focused on balancing the trade-offs inherent in SRS and KFCV. By adjusting the influence of each iteration's performance, FSV aimed to reduce over-generalisation and improve the balance between bias and variance. This study's findings are expected to guide data scientists in choosing the most effective data partitioning method for their specific needs, enhancing the accuracy and reliability of machine learning models.
	
\subsubsection*{Dataset size $N = 10,000$}
Table \ref{tab:1} shows statistical properties of partitioned data at N = 10,000, T = 10, 50, and 100, respectively. In 10 trials, the mean estimates were 0.0039 for SRS, 0.0031 for KF, and 0.0037 for FSV. Notably, FSV showed the lowest variance (0.9502) and mean squared error (MSE) (0.9579), indicating its superior accuracy and reliability. Additionally, the bias for FSV was slightly lower (0.0286) compared to SRS and KF, suggesting that FSV's estimates are closer to the true population parameters. The convergence rates for mean and variance estimates were also competitive for FSV, demonstrating its efficiency in reaching stable estimates quickly.

In the case of 50 trials, similar trends were observed. FSV maintained a lower variance (0.9512) and a better MSE (0.9531) than SRS and KF, reaffirming its robustness and precision. The bias and convergence rates for FSV, SRS, and KF remained consistent across methods, indicating that the performance advantages of FSV are stable even as the number of trials increases. This consistency across increased trials highlights FSV's scalability and reliability. Considering 100 trials, it is further confirmed that FSV has the best performance. FSV exhibited the lowest variance (0.9497) and MSE (0.9508) among the three methods. The mean estimates for FSV were very close to those of SRS and KF, but with the added benefits of reduced variance and MSE, FSV provided more reliable and accurate results.
	
	\begin{table}[H]
		\centering
		\caption{Statistical properties of the partitioned data at $N = 10,000$.}
		\begin{tabular}{lccccccccc}
			\toprule
			Statistical Metrics & \multicolumn{3}{c}{10 Trials} & \multicolumn{3}{c}{50 Trials} & \multicolumn{3}{c}{100 Trials} \\
			\cmidrule(lr){2-4} \cmidrule(lr){5-7} \cmidrule(lr){8-10}
			& Mean & Min & Max & Mean & Min & Max & Mean & Min & Max \\
			\midrule
			Mean est. (SRS) & 0.0039 & -0.0203 & 0.0182 & 0.0001 & -0.0114 & 0.0038 & 0.0011 & -0.0064 & 0.0049 \\
			Mean est. KF & 0.0031 & -0.0229 & 0.0128 & 0.0008 & -0.0036 & 0.0064 & 0.0020 & -0.0003 & 0.0073 \\
			Mean est. FSV & 0.0037 & -0.0192 & 0.0173 & 0.0001 & -0.0109 & 0.0036 & 0.0010 & -0.0060 & 0.0046 \\
			Var. est. SRS & 1.0002 & 0.9819 & 1.0131 & 1.0013 & 0.9650 & 1.0142 & 0.9997 & 0.9715 & 1.0051 \\
			Var est. KF & 1.0021 & 0.9776 & 1.0112 & 1.0016 & 0.9852 & 1.0049 & 0.9998 & 0.9794 & 1.0021 \\
			Var est. FSV & 0.9501 & 0.9328 & 0.9624 & 0.9512 & 0.9167 & 0.9635 & 0.9497 & 0.9229 & 0.9548 \\
			MSE SRS & 1.0084 & 0.9235 & 1.0592 & 1.0032 & 0.9714 & 1.0714 & 1.0008 & 0.9844 & 1.0267 \\
			MSE KF & 1.0071 & 0.9169 & 1.0521 & 1.0019 & 0.9683 & 1.0419 & 0.9995 & 0.9781 & 1.0067 \\
			MSE FSV & 0.9579 & 0.8774 & 1.0063 & 0.9530 & 0.9228 & 1.0178 & 0.9508 & 0.9352 & 0.9754 \\
			Bias SRS & 0.0301 & 0.0101 & 0.1463 & 0.0287 & 0.0133 & 0.1719 & 0.0280 & 0.0135 & 0.1539 \\
			Bias KF & 0.0301 & 0.0101 & 0.1463 & 0.0287 & 0.0133 & 0.1719 & 0.0280 & 0.0135 & 0.1539 \\
			Bias FSV & 0.0286 & 0.0096 & 0.1390 & 0.0272 & 0.0127 & 0.1633 & 0.0266 & 0.0128 & 0.1462 \\
			ROC Mean est. SRS & 0.0094 & 0.0033 & 0.0322 & 0.0093 & 0.0043 & 0.0502 & 0.0091 & 0.0044 & 0.0486 \\
			ROC Mean est. KF & 0.0067 & 0.0020 & 0.0491 & 0.0067 & 0.0026 & 0.0411 & 0.0066 & 0.0031 & 0.0385 \\
			ROC Mean est. FSV & 0.0089 & 0.0031 & 0.0306 & 0.0088 & 0.0041 & 0.0477 & 0.0086 & 0.0042 & 0.0461 \\
			ROC Var est. SRS & 0.0124 & 0.0044 & 0.0750 & 0.0126 & 0.0054 & 0.0604 & 0.0123 & 0.0061 & 0.0674 \\
			ROC Var est. KF & 0.0102 & 0.0029 & 0.0664 & 0.0093 & 0.0041 & 0.0502 & 0.0096 & 0.0045 & 0.0578 \\
			ROC Var est. FSV & 0.0118 & 0.0042 & 0.0714 & 0.0120 & 0.0051 & 0.0575 & 0.0117 & 0.0058 & 0.0640 \\
			\bottomrule
		\end{tabular} \label{tab:1}
	\end{table}
	
Figures \ref{fig:1a} to \ref{fig:1c} are plots of statistical properties of partitioned data at $N = 10,000, T = 10, 50, \mbox{ and } 100$, respectively. Various estimation methods exhibit distinct behaviours across different metrics for a sample size of $N = 10,000$. Regarding mean estimates, SRS initially shows significant variance at T = 10, whereas KF provides more stable estimates, and FSV demonstrates the highest stability. As the number of trials ($T$) increases to 50 and 100, SRS stabilises, KF remains consistent, and FSV shows the least variance, indicating superior stability. Moving to variance estimates, at N = 10,000, SRS exhibits considerable variability initially, which stabilises as $T$ increases. KF offers more stable variance estimates from the outset, while FSV consistently demonstrates the least variance, highlighting its superior stability across different $T$ values.

	\begin{figure}[H]
		\centering
		\includegraphics[width=1.0\textwidth]{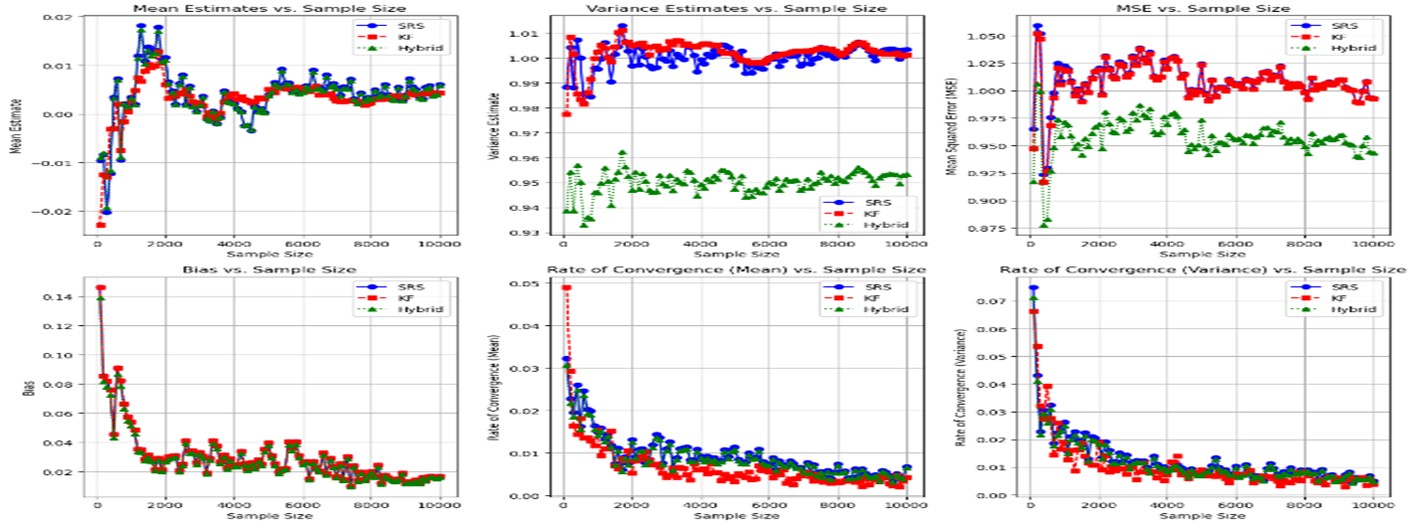}
		\caption{Plots of statistical properties of partitioned data at $N = 10,000, T = 10$.}
		\label{fig:1a}
	\end{figure}
	
	\begin{figure}[H]
		\centering
		\includegraphics[width=1.0\textwidth]{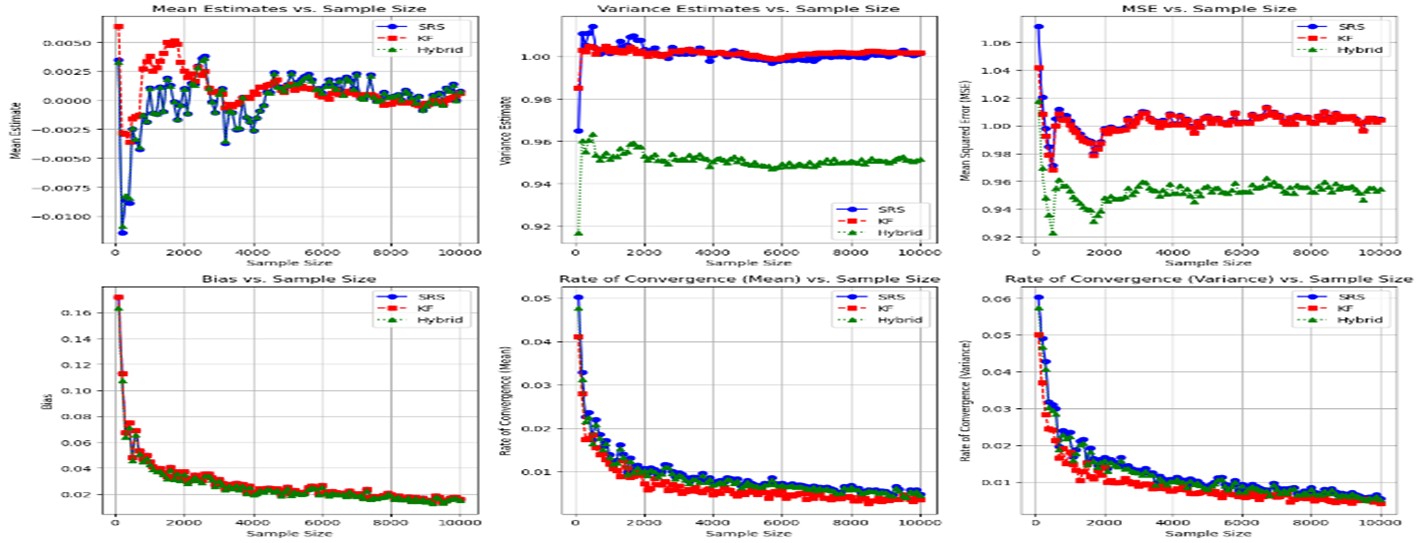}
		\caption{Plots of statistical properties of partitioned data at $N = 10,000, T = 50$.}
		\label{fig:1b}
	\end{figure}
	
	\begin{figure}[H]
		\centering
		\includegraphics[width=1.0\textwidth]{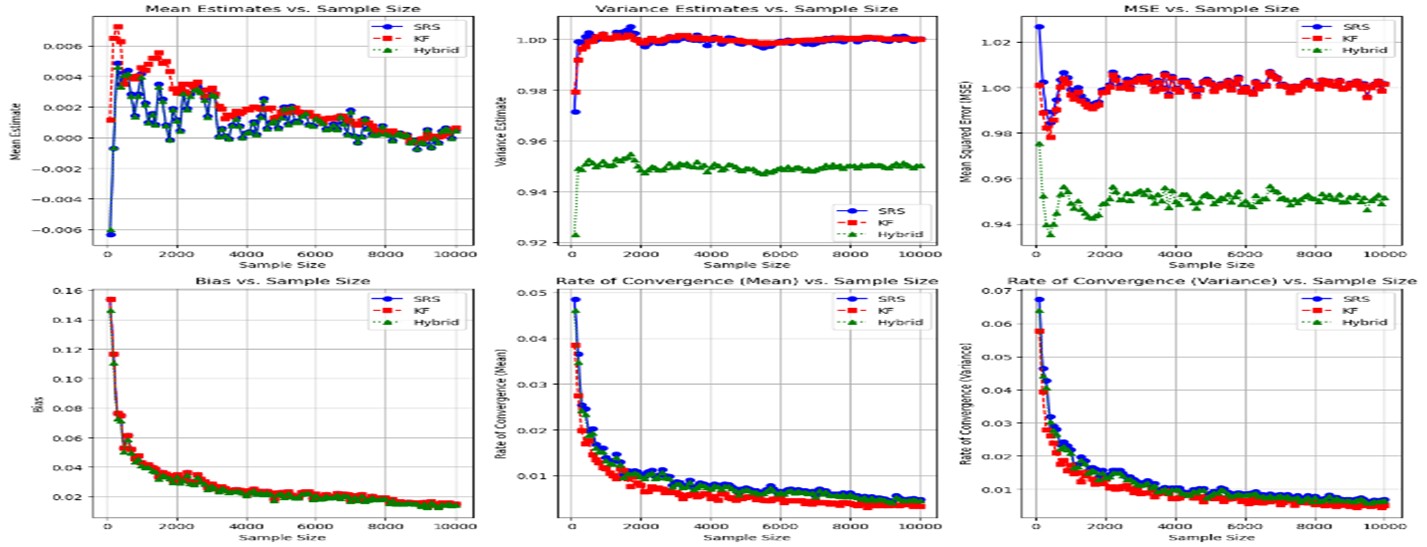}
		\caption{Plots of statistical properties of partitioned data at $N = 10,000, T = 100$.}
		\label{fig:1c}
	\end{figure}
	
Mean Squared Error (MSE) comparisons show SRS starting with a higher MSE due to initial variance, which decreases with increasing T. In contrast, KF maintains a low MSE consistently, and FSV consistently presents the lowest MSE, emphasising its accuracy. Regarding bias, SRS initially exhibits higher bias, which diminishes with increasing T, while KF maintains consistently low bias, and FSV consistently shows the least bias, underscoring its reliability. Regarding the convergence rate for mean estimates, SRS starts with slower convergence due to initial variance, which improves with higher T. In comparison, KF achieves faster convergence through fold averaging, and FSV consistently demonstrates the fastest convergence rate. Similarly, SRS converges more slowly in variance estimates due to higher inherent variance, improving with higher T. At the same time, KF maintains a rapid pace, and FSV consistently shows the fastest convergence, confirming its efficiency.
	
\subsubsection*{Dataset size $N = 50,000$}
The statistical properties of partitioned data with a size of 50,000 across 10, 50, and 100 trials are shown in Table \ref{tab:2}. For 10 trials, FSV showed superior performance with an average mean estimate of 0.0012, variance of 0.9487, and MSE of 0.9535, indicating the lowest variance and best MSE. FSV's bias was slightly lower at 0.0131 compared to 0.0137 for SRS and KF. The convergence rates for FSV were 0.0041 for mean estimates and 0.0056 for variance estimates. In comparison, SRS had a mean estimate of 0.0013, variance of 0.9987, and MSE of 1.0037, while KF had a mean estimate of 0.0007, variance of 1.0000, and MSE of 1.0033.

In the 50 trials, FSV's mean estimates averaged 0.0001, variance was 0.9501, and MSE was 0.9521. FSV's bias was 0.0123, better than the 0.0129 for SRS and KF. The convergence rates for FSV were 0.0039 for mean estimates and 0.0057 for variance estimates. SRS had a variance of 1.0001 and MSE of 1.0022, while KF had a variance of 1.0006 and MSE of 1.0019. In 100 trials, FSV maintained its superior performance with mean estimates at 0.0001, variance at 0.9493, and MSE at 0.9501. FSV's bias was 0.0122, compared to 0.0129 for SRS and KF. The convergence rates for FSV were 0.0039 for mean estimates and 0.0056 for variance estimates. SRS had a variance of 0.9993 and an MSE of 1.0001, while KF had a variance of 0.9995 and an MSE of 0.9998.
	
	\begin{table}[H]
		\centering
		\caption{Statistical properties of partitioned data at $N = 50,000$.}
		\begin{tabular}{lccccccccc}
			\toprule
			Statistical Metrics & \multicolumn{3}{c}{10 Trials} & \multicolumn{3}{c}{50 Trials} & \multicolumn{3}{c}{100 Trials} \\
			\cmidrule(lr){2-4} \cmidrule(lr){5-7} \cmidrule(lr){8-10}
			& Mean & Min & Max & Mean & Min & Max & Mean & Min & Max \\
			\midrule
			Mean est. (SRS) & 0.0013 & -0.0203 & 0.0182 & 0.0001 & -0.0114 & 0.0038 & 0.0001 & -0.0064 & 0.0049 \\
			Mean est. KF & 0.0007 & -0.0229 & 0.0128 & 0.0001 & -0.0036 & 0.0064 & 0.0002 & -0.0009 & 0.0073 \\
			Mean est. FSV & 0.0012 & -0.0192 & 0.0173 & 0.0001 & -0.0109 & 0.0036 & 0.0001 & -0.0060 & 0.0046 \\
			Var. est. SRS & 0.9987 & 0.9819 & 1.0131 & 1.0001 & 0.9650 & 1.0142 & 0.9993 & 0.9715 & 1.0051 \\
			Var est. KF & 1.0000 & 0.9776 & 1.0112 & 1.0006 & 0.9852 & 1.0049 & 0.9995 & 0.9794 & 1.0021 \\
			Var est. FSV & 0.9487 & 0.9328 & 0.9624 & 0.9501 & 0.9167 & 0.9635 & 0.9493 & 0.9229 & 0.9548 \\
			MSE SRS & 1.0037 & 0.9235 & 1.0592 & 1.0022 & 0.9714 & 1.0714 & 1.0001 & 0.9844 & 1.0267 \\
			MSE KF & 1.0033 & 0.9169 & 1.0521 & 1.0019 & 0.9683 & 1.0419 & 0.9998 & 0.9781 & 1.0067 \\
			MSE FSV & 0.9535 & 0.8774 & 1.0063 & 0.9521 & 0.9228 & 1.0178 & 0.9501 & 0.9352 & 0.9754 \\
			Bias SRS & 0.0137 & 0.0041 & 0.1463 & 0.0129 & 0.0055 & 0.1719 & 0.0129 & 0.0060 & 0.1539 \\
			Bias KF & 0.0137 & 0.0041 & 0.1463 & 0.0129 & 0.0055 & 0.1719 & 0.0129 & 0.0060 & 0.1539 \\
			Bias FSV & 0.0130 & 0.0039 & 0.1390 & 0.0123 & 0.0052 & 0.1633 & 0.0122 & 0.0057 & 0.1462 \\
			ROC Mean est. SRS & 0.0043 & 0.0013 & 0.0322 & 0.0041 & 0.0017 & 0.0502 & 0.0041 & 0.0019 & 0.0486 \\
			ROC Mean est. KF & 0.0031 & 0.0007 & 0.0491 & 0.0031 & 0.0013 & 0.0411 & 0.0031 & 0.0014 & 0.0385 \\
			ROC Mean est. FSV & 0.0041 & 0.0013 & 0.0306 & 0.0039 & 0.0016 & 0.0477 & 0.0039 & 0.0018 & 0.0461 \\
			ROC Var est. SRS & 0.0059 & 0.0015 & 0.0750 & 0.0060 & 0.0025 & 0.0604 & 0.0059 & 0.0029 & 0.0674 \\
			ROC Var est. KF & 0.0045 & 0.0010 & 0.0664 & 0.0043 & 0.0018 & 0.0502 & 0.0044 & 0.0020 & 0.0578 \\
			ROC Var est. FSV & 0.0056 & 0.0014 & 0.0714 & 0.0057 & 0.0024 & 0.0575 & 0.0056 & 0.0027 & 0.0640 \\
			\bottomrule
		\end{tabular} \label{tab:2}
	\end{table}
	
The plots of the statistical properties of partitioned data with a size of 50,000 evaluated at 10, 50, and 100 trials are presented in Figures \ref{fig:2a} to \ref{fig:2c}. For a larger sample size of $N = 50,000$, the trends in estimation methods evolve. Mean estimates from SRS become more stable even at $T = 10$, benefiting from the larger sample size. At the same time, KF remains reliable, and FSV outperforms stability and accuracy across higher T values. Variance estimates show SRS stabilising at $T = 10$, with KF maintaining steadiness and FSV demonstrating superior stability consistently.

MSE analysis reveals SRS benefiting from reduced variance, KF maintaining steady low MSE, and FSV consistently providing the most accurate MSE estimates, especially at higher T values. Bias estimates improve across all methods, with SRS showing reduced bias, KF maintaining low bias consistently, and FSV continuing to exhibit the least bias. The convergence rate for mean estimates shows SRS converging faster due to the larger sample size, KF maintaining rapid convergence, and FSV consistently achieving the fastest convergence, emphasising its effectiveness. Similarly, in variance estimates, SRS benefits from reduced variance and converges faster, KF maintains swift convergence, and FSV continues to lead with the fastest convergence rate.
	
	\begin{figure}[H]
		\centering
		\includegraphics[width=1.0\textwidth]{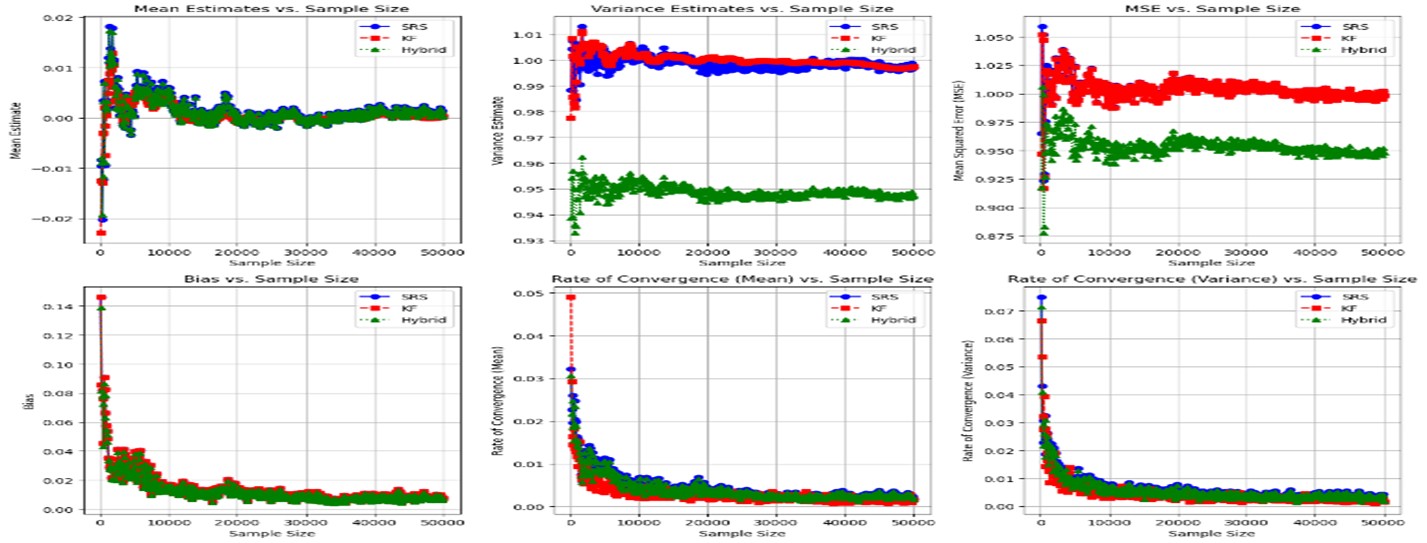}
		\caption{Plots of statistical properties of partitioned data at N$ = 50,000, T = 10$.}
		\label{fig:2a}
	\end{figure}
	
	\begin{figure}[H]
		\centering
		\includegraphics[width=1.0\textwidth]{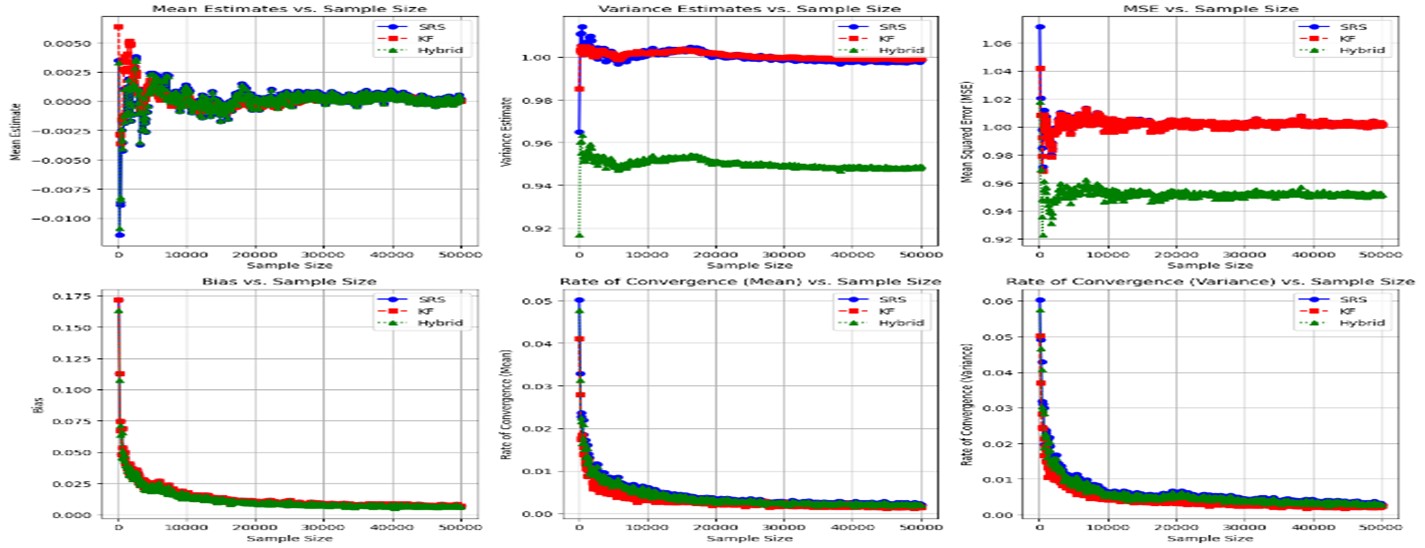}
		\caption{Plots of statistical properties of partitioned data at $N = 50,000, T = 50$.}
		\label{fig:2b}
	\end{figure}
	
	\begin{figure}[H]
		\centering
		\includegraphics[width=1.0\textwidth]{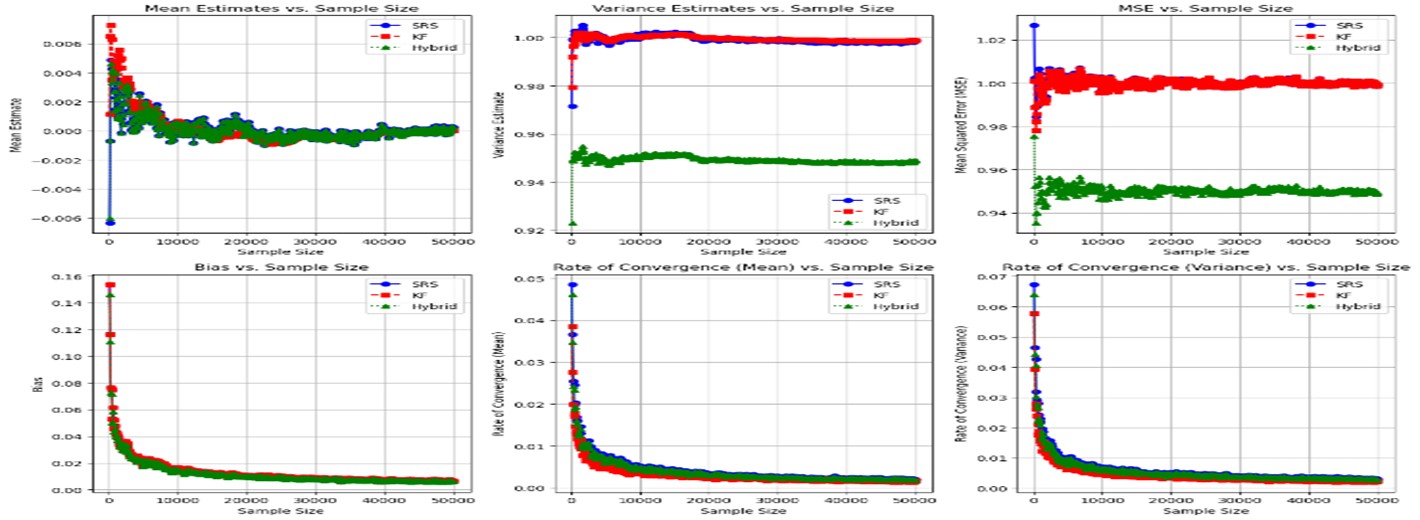}
		\caption{Plots of statistical properties of partitioned data at $N = 100,000, T = 100$.}
		\label{fig:2c}
	\end{figure}

\subsubsection*{Dataset Size $N = 100,000$}
The statistical properties of partitioned data with a size of 100,000 were evaluated across 10, 50, and 100 trials, as shown in Table \ref{tab:3}. For 10 trials, FSV demonstrated superior performance with an average mean estimate of 0.0015, variance of 0.9490, and MSE of 0.9504, indicating the lowest variance and best MSE. FSV's bias was slightly lower at 0.0092 compared to 0.0097 for both SRS and KF. The convergence rates for FSV were 0.0029 for mean estimates and 0.0039 for variance estimates. In contrast, SRS had a mean estimate of 0.0016, variance of 0.9989, and MSE of 1.0004, while KF had a mean estimate of 0.0010, variance of 0.9993, and MSE of 1.0002.

In the 50 trials, FSV continued to excel with mean estimates averaging 2.77e-05, variance at 0.9496, and MSE at 0.9515. FSV's bias was 0.0087, better than 0.0092 for both SRS and KF. The convergence rates for FSV were 0.0028 for mean estimates and 0.0040 for variance estimates. SRS had a variance of 0.9996 and an MSE of 1.0016, while KF had a variance of 1.0000 and an MSE of 1.0014. Furthermore, for 100 trials, FSV maintained superior performance with mean estimates at 5.09e-05, variance at 0.9491, and MSE at 0.9498. FSV's bias was 0.0087, compared to 0.0091 for SRS and KF. The convergence rates for FSV were 0.0028 for mean estimates and 0.0040 for variance estimates. SRS had a variance of 0.9990 and an MSE of 0.9998, while KF had a variance of 0.9992 and an MSE of 0.9996.

	\begin{table}[H]
		\centering
		\caption{Statistical properties of partitioned data at $N = 100,000$.}
		\begin{tabular}{lccccccccc}
			\toprule
			Statistical Metrics & \multicolumn{3}{c}{10 Trials} & \multicolumn{3}{c}{50 Trials} & \multicolumn{3}{c}{100 Trials} \\
			\cmidrule(lr){2-4} \cmidrule(lr){5-7} \cmidrule(lr){8-10}
			& Mean & Min & Max & Mean & Min & Max & Mean & Min & Max \\
			\midrule
			Mean est. (SRS) & 0.0016 & -0.0203 & 0.0182 & 0.0000 & -0.0114 & 0.0038 & 0.0001 & -0.0064 & 0.0049 \\
			Mean est. KF & 0.0010 & -0.0229 & 0.0128 & 0.0000 & -0.0036 & 0.0064 & 0.0001 & -0.0009 & 0.0073 \\
			Mean est. FSV & 0.0015 & -0.0192 & 0.0173 & 0.0000 & -0.0109 & 0.0036 & 0.0001 & -0.0060 & 0.0046 \\
			Var. est. SRS & 0.9989 & 0.9819 & 1.0131 & 0.9996 & 0.9650 & 1.0142 & 0.9990 & 0.9715 & 1.0051 \\
			Var est. KF & 0.9993 & 0.9776 & 1.0112 & 1.0000 & 0.9852 & 1.0049 & 0.9992 & 0.9794 & 1.0021 \\
			Var est. FSV & 0.9490 & 0.9328 & 0.9624 & 0.9496 & 0.9167 & 0.9635 & 0.9490 & 0.9229 & 0.9548 \\
			MSE SRS & 1.0004 & 0.9235 & 1.0592 & 1.0016 & 0.9714 & 1.0714 & 0.9998 & 0.9844 & 1.0267 \\
			MSE KF & 1.0002 & 0.9169 & 1.0521 & 1.0014 & 0.9683 & 1.0419 & 0.9996 & 0.9781 & 1.0067 \\
			MSE FSV & 0.9504 & 0.8774 & 1.0063 & 0.9515 & 0.9228 & 1.0178 & 0.9498 & 0.9352 & 0.9754 \\
			Bias SRS & 0.0097 & 0.0014 & 0.1463 & 0.0092 & 0.0034 & 0.1719 & 0.0091 & 0.0037 & 0.1539 \\
			Bias KF & 0.0097 & 0.0014 & 0.1463 & 0.0092 & 0.0034 & 0.1719 & 0.0091 & 0.0037 & 0.1539 \\
			Bias FSV & 0.0092 & 0.0013 & 0.1390 & 0.0087 & 0.0032 & 0.1633 & 0.0087 & 0.0035 & 0.1462 \\
			ROC Mean est. SRS & 0.0031 & 0.0007 & 0.0322 & 0.0029 & 0.0011 & 0.0502 & 0.0029 & 0.0013 & 0.0486 \\
			ROC Mean est. KF & 0.0022 & 0.0005 & 0.0491 & 0.0022 & 0.0009 & 0.0411 & 0.0022 & 0.0010 & 0.0385 \\
			ROC Mean est. FSV & 0.0029 & 0.0006 & 0.0306 & 0.0028 & 0.0011 & 0.0477 & 0.0028 & 0.0012 & 0.0461 \\
			ROC Var est. SRS & 0.0041 & 0.0010 & 0.0750 & 0.0042 & 0.0017 & 0.0604 & 0.0042 & 0.0018 & 0.0674 \\
			ROC Var est. KF & 0.0032 & 0.0007 & 0.0664 & 0.0031 & 0.0010 & 0.0502 & 0.0031 & 0.0013 & 0.0578 \\
			ROC Var est. FSV & 0.0039 & 0.0010 & 0.0714 & 0.0040 & 0.0016 & 0.0575 & 0.0040 & 0.0017 & 0.0640 \\
			\bottomrule
		\end{tabular} \label{tab:3}
	\end{table}
	
Figures \ref{fig:3a} to \ref{fig:3c} are plots of statistical properties of partitioned data at N = 100,000, T = 10, 50, and 100, respectively. At the largest sample size of N = 100,000, estimation methods demonstrate heightened stability and accuracy across all metrics. SRS exhibits highly stable mean estimates with minimal variance, KF provides consistent estimates, and FSV remains the best performer with the least variance, confirming its effectiveness. Variance estimates from SRS are highly stable, KF offers reliable results, and FSV consistently demonstrates the least variance, underscoring its superiority.

MSE analysis shows SRS achieving lower MSE, KF maintaining a consistently low MSE, and FSV consistently providing the lowest MSE, highlighting its accuracy and reliability. Bias estimates further improve, with SRS showing minimal bias, KF maintaining low bias consistently, and FSV continuing to present the least bias across varying T values. The convergence rate for both mean and variance estimates reflects accelerated performance across all methods, with SRS converging significantly faster, KF maintaining a rapid pace, and FSV consistently achieving the fastest convergence, emphasising its efficiency and reliability across diverse scenarios.

\begin{figure}[H]
	\centering
	\includegraphics[width=1.0\textwidth]{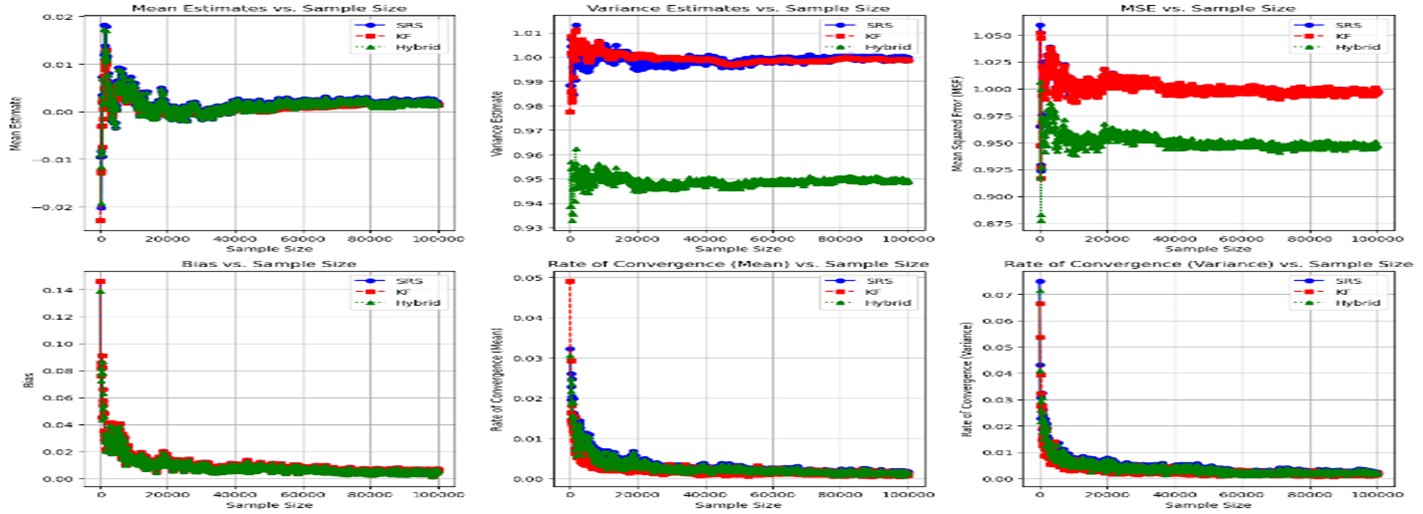}
	\caption{Plots of statistical properties of partitioned data at $N = 100,000, T = 10$.}
	\label{fig:3a}
\end{figure}

\begin{figure}[H]
	\centering
	\includegraphics[width=1.0\textwidth]{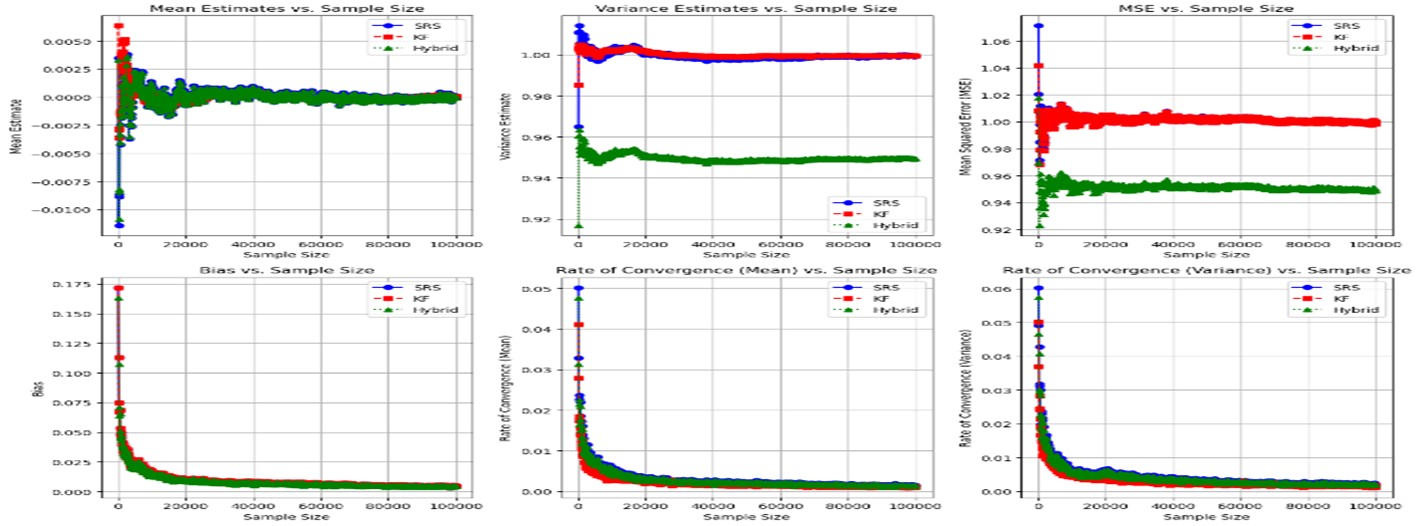}
	\caption{Plots of Statistical properties of partitioned data at $N = 100,000, T = 50$.}
	\label{fig:3b}
\end{figure}

\begin{figure}[H]
	\centering
	\includegraphics[width=1.0\textwidth]{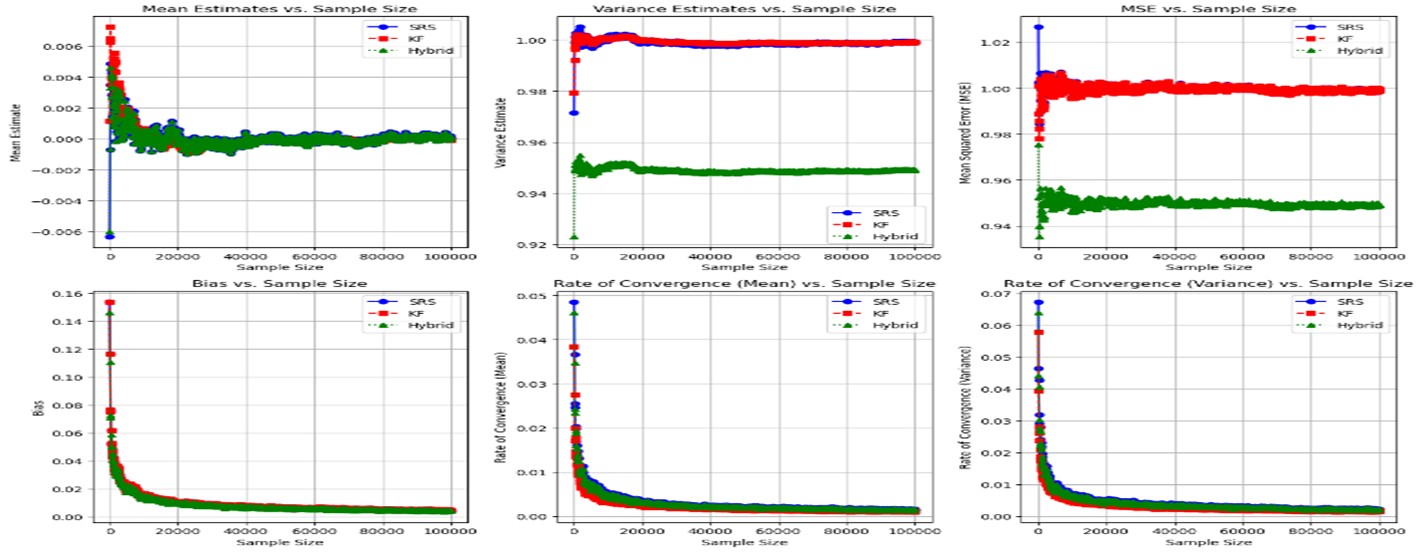}
	\caption{Plots of statistical properties of partitioned data at $N = 100,000, T = 100$.}
	\label{fig:3c}
\end{figure}

\section{Discussion} \label{sec:4.0}
Data partitioning techniques like Simple Random Sampling (SRS) and K-Fold Cross-Validation (KFCV) are fundamental for model training and evaluation in machine learning. SRS ensures that training and validation samples are representative, which enhances model generalisation. However, SRS can lead to non-representative training sets, particularly with imbalanced data distributions, and exhibits high variance and bias with smaller sample sizes, stabilising only with more trials and larger sample sizes. On the other hand, KFCV provides robust, stable, and consistent estimates, making it a reliable method from the outset. Despite these strengths, KFCV demands significant computational resources for large datasets and requires meticulous data shuffling, which can be computationally intensive and time-consuming.

To address these challenges, a hybrid approach, the Fusion Sampling Validation (FSV), is a better method that combines the strengths of both SRS and KFCV. FSV mitigates the computational demands and non-representative sample issues faced by SRS and KFCV, optimising data partitioning and enhancing the efficiency and performance of machine learning models. FSV consistently demonstrates the highest stability, least variance, lowest MSE, and least bias across all sample sizes and trial numbers. Additionally, it achieves the fastest convergence rates for both mean and variance estimates, emphasising its effectiveness and reliability.

The implications of these findings are significant. FSV stands out as the most effective estimation method across all metrics for sample sizes of $N = 10,000, N = 50,000, \mbox{ and } N = 100,000$, surpassing the performance of SRS and KFCV. This hybrid method optimises data partitioning by balancing computational efficiency and representativeness, ensuring better model generalisation and reliability. For smaller datasets, KFCV may still be preferred due to its robustness. However, FSV emerges as the best choice for larger datasets, leveraging the strengths and mitigating the weaknesses of both SRS and KFCV.

Some hybrid validation techniques, such as Monte Carlo CV and Stratified K-Fold CV, among others, are suggested in the literature to overcome the drawbacks of conventional data splitting. These techniques integrate random sampling with replacement or stratification (for imbalanced datasets) with cross-validation to enhance representativeness, stability, and computational efficiency. Nevertheless, a comprehensive comparison between them and the novel Fusion Sampling Validation (FSV) is still missing. Future research could empirically contrast FSV with these hybrids, particularly for large-scale, imbalanced, or high-dimensional data.

\section{Conclusion} \label{sec:5.0}
The development and adoption of the FSV are crucial for overcoming the limitations of traditional data partitioning techniques. By enhancing machine learning models' stability, accuracy, and efficiency, FSV represents a significant advancement in data partitioning strategies. Continued research and development of hybrid approaches will further address the evolving challenges in machine learning and data science, ensuring optimised model performance and reliability across diverse applications.

	\section*{Acknowledgment}
	Authors acknowledge the Associateship Program, QLS Section, Abdus Salam International Centre for Theoretical Physics, Trieste, Italy.
	
	\section*{Funding}
	This research was funded by the Associates Programme of the Abdus Salam International Centre for Theoretical Physics, Trieste, Italy.
	
\section*{Author contributions}

\textbf{Christopher Godwin Udomboso:} Conceptualization, Methodology, Software, Funding acquisition  \textbf{Caston Sigauke:} Writing- Reviewing and Editing, Validation, Project administration  \textbf{Ini Adinya:} Writing- Original draft preparation,  Investigation, Software Validation. All authors have read and agreed to the published version of the manuscript.


\section*{Conflicts of Interest:} The corresponding author declares that there are no conflicts of interest on behalf of all authors.

\section*{Abbreviations}
{The following abbreviations are used in this manuscript:\\
	\noindent
	\begin{tabular}{@{}ll}
	DNN & Deep Neural Network \\
	FSV & Fusion Sampling Validation \\
	ME & Mean Error \\
	MSE & Mean Squared Error \\
	KFCV & K-Fold Cross Validation \\
	SRS & Simple Random Sampling \\		
	VE & Variance estimate \\
	\end{tabular}
}



\end{document}